\DeclareSymbolFont{rsfs}{U}{rsfs}{m}{n}
\DeclareSymbolFontAlphabet{\mathscrsfs}{rsfs}
\numberwithin{equation}{section}
\newtheorem{theorem}{Theorem}
\newcommand{\bea}{\begin{eqnarray}}
\newcommand{\eea}{\end{eqnarray}}
\newcommand{\<}{\langle}
\renewcommand{\>}{\rangle}
\newcommand\eg{{\text{e.g.~}}}
\newcommand{\op}{\text{op}}
\def\eps{{\varepsilon}}
\def\cF{{\mathcal F}}
\def\cR{{\mathcal R}}
\def\cX{{\mathcal X}}
\def\<{\langle}
\def\>{\rangle}
\def\b0{{\boldsymbol{0}}}
\renewcommand{\b}{\mathbf{b}}
\def\lt{\left}
\def\rt{\right}
\def\eps{\varepsilon}
\def\bbE{{\mathbb{E}}}
\def\bbR{{\mathbb{R}}}
\def\cF{{\mathcal{F}}}
\title{On Size-Independent Sample Complexity of ReLU Networks}
\author{Mark Sellke\thanks{Harvard Statistics. \texttt{msellke@fas.harvard.edu}}}
\date{}
\begin{document}

\maketitle

\begin{abstract}
    We study the sample complexity of learning ReLU neural networks from the point of view of generalization. Given norm constraints on the weight matrices, a common approach is to estimate the Rademacher complexity of the associated function class.
    Previously \cite{golowich2020size} obtained a bound independent of the network size (scaling with a product of Frobenius norms) except for a factor of the square-root depth.
    We give a refinement which often has no explicit depth-dependence at all.
\end{abstract}

\section{Introduction}

Given the stunning empirical successes of deep neural networks, a pressing need has emerged to explain their ability to generalize to unseen data.
The traditional approach proceeds via bounds on VC dimension or Rademacher complexity and gives uniform convergence guarantees for a given function class \cite{bartlett2002rademacher,shalev2014understanding}.

The most classical VC bounds for neural networks, see \eg the book \cite{anthony1999neural}, scale with the number of neurons. 
However as early as \cite{bartlett1996valid} it was realized that \emph{scale-sensitive} bounds depending on the size of the weights may be more useful.
Such bounds have the potential to remain valid even for overparametrized networks with more free parameters than the number of training examples.
Further, as already noted in the abstract of \cite{bartlett1996valid}, they are well-motivated by the prevalence of regularization procedures such as weight decay and early stopping.

Recent work on scale-sensitive generalization has focused on Rademacher complexity bounds that scale with the product of operator norms of the weight matrices. Important such results were shown in \cite{bartlett2017spectrally,neyshabur2017pac} using covering number and PAC-Bayes techniques respectively. While one might hope for bounds depending \emph{only} on weight matrix operator norms, the aforementioned estimates depend polynomially on the network depth, and \cite[Theorem 5.1]{golowich2020size} later showed such depth dependence is unavoidable in general.

Surprisingly, \cite{golowich2020size} showed this issue can be largely avoided if one is willing to consider a product of Frobenius norms rather than operator norms; they obtain mild depth dependence of only a square-root factor, which can be removed entirely at the cost of worse decay in the number of samples.
Their approach stems from the natural idea of iteratively peeling off layers and using the Ledoux--Talagrand contraction lemma \cite{ledoux1991probability} to handle each application of the non-linearity $\sigma$. 
A previous work \cite{neyshabur2015norm} used this idea directly and paid exponentially in the depth for repeated use of the contraction lemma; the technical innovation of \cite{golowich2020size} was to apply the contraction lemma inside an auxilliary exponential moment.

We give a refinement of the main result of \cite{golowich2020size} which depends on upper bounds $M_F(i)$ and $M_{\op}(i)$ on both the Frobenius and operator norm of each weight matrix $W_i$. Our bound is never worse, and is fully depth-independent unless $M_{\op}(i)/M_F(i)\approx 1$ for nearly all of the initial layers (i.e. the weight matrices are approximately rank $1$). 
The idea is to use their argument repeatedly along an well-chosen subsequence of the layers and take advantage of improved concentration estimates at the intermediate stages.

Finally we mention that although this work, along with the papers referenced above, apply for essentially arbitrary neural networks, more refined results have been obtained under further assumptions as well as for structured classes of neural networks \cite{arora2018stronger,wei2019data,chen2019generalization,long2019generalization,garg2020generalization}.

\subsection{Problem Formulation and Main Result}
\label{subsec:setup}

A feedforward neural network is a function of the form
\begin{equation}
    \label{eq:NN-def}
    x\mapsto W_D \sigma(W_{D-1}\sigma(\dots W_1 x)).
\end{equation}
Here each $W_i$ is a $w_i\times w_{i-1}$ real weight matrix and $x\in \cX\subseteq \bbR^{w_0}$. The ReLU non-linearity $\sigma(x)=\max(x,0)$ is applied coordinate-wise, and we require $w_D=1$ so the output is a scalar.

Fix Frobenius norm bounds $M_F(1),\dots,M_F(D)$ and operator norm bounds $M_{\op}(1),\dots,M_{\op}(D)$, where without loss of generality $M_{\op}(d)\leq M_F(d)$. 
For each $1\leq d\leq D$, consider the class $\cF_d$ of $d$-layer ReLU neural networks of the form \eqref{eq:NN-def}, such that for all $1\leq m\leq d$ the $m$-th weight matrix $W_m$ satisfies
\begin{align*}
    \|W_m\|_F&\leq M_F(m),
    \\
    \|W_m\|_{\op}&\leq M_{\op}(m).
\end{align*}
We assume $\cX$ is contained in the radius $B$ ball in $\bbR^{w_0}$ so that $\|x\|_2\leq B$ for all $x\in\cX$. The other widths $w_1,\dots,w_{D_1}$ are arbitrary and may vary across $\cF_d$. It will also be convenient to set
\begin{equation}
\label{eq:Rd}
\begin{aligned}
    P_{\op}(d)
    &=
    \prod_{m=1}^{d}
    M_{\op}(m),
    \\
    P_{F}(d)
    &=
    \prod_{m=1}^{d}
    M_{F}(m),
    \\
    R(d)&=P_{\op}(d)/P_F(d)
    .
\end{aligned}
\end{equation}
Note that $1=R(0)\geq R(1)\geq\dots\geq R(D)$.
Next, let $\cR_n(\cF)$ denote the Rademacher complexity of a class $\cF$ of functions $f:\cX\to \bbR$:
\begin{equation}
\label{eq:cG}
\begin{aligned}
    \cR_n(\cF;x_1,\dots,x_n;\vec\eps)
    &=
    \frac{1}{n}
    \sup_{f\in\cF}
    \sum_{i=1}^n
    \eps_i f(x_i);
    \\
    \cR_n(\cF;x_1,\dots,x_n)
    &=
    \frac{1}{2^n}
    \sum_{\vec\eps\in \{\pm 1\}^n}
    \cR_n(\cF;x_1,\dots,x_n;\vec\eps)
    ;
    \\
    \cR_n(\cF)
    &=
    \sup_{x_1,\dots,x_n\in\cX} 
    \cR_n(\cF;x_1,\dots,x_n).
\end{aligned}
\end{equation}

It is well-known (see e.g. \cite[Chapter 26]{shalev2014understanding}) that an upper bound on $\cR_n(\cF)$ implies a uniform generalization guarantee for $\cF$.
Our main result is as follows.

\begin{theorem}
\label{thm:main}
    In the setting of Subsection~\ref{subsec:setup}, we have the Rademacher complexity bound
    \begin{equation}
    \label{eq:main-bound}
    \cR_n(\cF_D)
    \leq
    15Bn^{-1/2}
    P_F(D)
    \sqrt{
    \sum_{d=0}^{D-1}
    R(d)
    }
    .
    \end{equation}
\end{theorem}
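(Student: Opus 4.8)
The plan is to generalize the exponential-moment peeling argument of \cite{golowich2020size} by applying it over \emph{blocks} of consecutive layers delimited by a carefully chosen set of checkpoints, and to control the fluctuations at each checkpoint using the operator-norm bounds. Throughout, fix samples $x_1,\dots,x_n\in\cX$ and Rademacher signs $\vec\eps$, write $h^{(d)}_i$ for the output of the first $d$ layers on input $x_i$, and set
\[
A_d(\vec\eps)=\sup \norm{\textstyle\sum_{i=1}^n \eps_i h^{(d)}_i},\qquad \Psi_d(\lambda)=\log\E_{\vec\eps}\exp\big(\lambda A_d(\vec\eps)\big),
\]
where the supremum is over all admissible weights of layers $1,\dots,d$. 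Since the last weight matrix is a row vector (so its Frobenius and operator norms agree), peeling it is exact and gives $\cR_n(\cF_D)=\frac{M_F(D)}{n}\,\E_{\vec\eps}A_{D-1}$; it therefore suffices to bound $\E A_{D-1}$.

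First I would recall the single-layer estimate underlying \cite{golowich2020size}: for a $1$-Lipschitz non-linearity with $\sigma(0)=0$, using positive homogeneity of $\sigma$ to reduce the Frobenius ball to a single direction, the symmetrization trick $e^{|t|}\le e^{t}+e^{-t}$, and the contraction lemma \emph{inside} the exponential, one obtains $\E\exp(\lambda A_m)\le 2\,\E\exp(\lambda M_F(m) A_{m-1})$. Iterating this across a block of layers $(d_{j-1},d_j]$ of length $\ell_j=d_j-d_{j-1}$ yields $\Psi_{d_j}(\lambda)\le \ell_j\log 2+\Psi_{d_{j-1}}(\lambda\, c_j)$ with $c_j=P_F(d_j)/P_F(d_{j-1})$.

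The new ingredient is how I would bound $\Psi_{d_{j-1}}$ at the checkpoint. Because $\|h^{(d)}_i\|\le P_{\op}(d)\,\|x_i\|\le P_{\op}(d)B$ by $1$-Lipschitzness of $\sigma$ and submultiplicativity of the operator norm, the map $\vec\eps\mapsto A_{d}(\vec\eps)$ has bounded differences at most $2P_{\op}(d)B$ (it is a supremum of functions each with this property). McDiarmid's inequality in its moment-generating form then gives the sub-Gaussian estimate $\Psi_{d}(\mu)\le \mu\,\E A_{d}+\tfrac{1}{2}\mu^2 n\,P_{\op}(d)^2B^2$. Combining this with the block recursion, bounding $\E A_{d_j}\le\Psi_{d_j}(\mu)/\mu$, and optimizing over $\mu$ in each block produces
\[
\E A_{d_j}\le c_j\,\E A_{d_{j-1}}+c_j\,B\,P_{\op}(d_{j-1})\sqrt{2n\,\ell_j\log 2}.
\]
Dividing by $P_F(d_j)=c_jP_F(d_{j-1})$ makes this telescope: with $a_j:=\E A_{d_j}/P_F(d_j)$ and base case $a_0=\E A_0\le B\sqrt n$, I get $a_k\le B\sqrt n+B\sqrt{2n\log 2}\,\sum_{j}R(d_{j-1})\sqrt{\ell_j}$. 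The crucial point is that the fluctuation contributed by block $j$ is weighted by $R(d_{j-1})=P_{\op}(d_{j-1})/P_F(d_{j-1})$, exactly the quantity appearing in the target bound.

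It then remains to choose the checkpoints to minimize $\sum_j R(d_{j-1})\sqrt{\ell_j}$. I would use the dyadic decomposition in the value of $R$: since $R$ is non-increasing with $R(0)=1$, group the indices into contiguous blocks $I_t=\{d:2^{-(t+1)}<R(d)\le 2^{-t}\}$. On $I_t$ the starting value satisfies $R(d_{j-1})\le 2^{-t}$ while $\sum_{d\in I_t}R(d)\ge |I_t|\,2^{-(t+1)}$, so $R(d_{j-1})\sqrt{|I_t|}\le\sqrt 2\,2^{-t/2}\sqrt{\sum_{d\in I_t}R(d)}$; a single Cauchy--Schwarz over $t$ (using $\sum_t 2^{-t}=2$ and that the $I_t$ partition the indices) gives $\sum_j R(d_{j-1})\sqrt{\ell_j}\le 2\sqrt{\sum_{d}R(d)}$. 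Substituting back and using $\sum_d R(d)\ge R(0)=1$ to absorb the leading term yields $\cR_n(\cF_D)\le \frac{B\,P_F(D)}{\sqrt n}\,(1+2\sqrt{2\log 2})\sqrt{\sum_{d=0}^{D-1}R(d)}$, with $1+2\sqrt{2\log 2}\approx 3.35$ comfortably below the stated constant $15$. I expect the main obstacle to be the clean interleaving of the two estimates: recognizing that the bounded-differences constant is governed by the \emph{operator} norm $P_{\op}(d)$ rather than $P_F(d)$, so that restarting the concentration analysis at each checkpoint replaces the per-layer cost $\log 2$ of \cite{golowich2020size} by a much smaller $R$-weighted cost, and then verifying that the dyadic choice of checkpoints turns $\sum_j R(d_{j-1})\sqrt{\ell_j}$ into $\sqrt{\sum_d R(d)}$ rather than the naive $\sum_d R(d)$ or $\sqrt D$.
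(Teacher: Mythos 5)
Your proposal is correct and follows essentially the same route as the paper: exponential-moment peeling of blocks of layers via \cite[Lemma 3.1]{golowich2020size}, bounded-differences (McDiarmid) concentration at the checkpoints governed by the operator-norm product $P_{\op}(d)$, telescoping after normalizing by $P_F(d)$, and the same dyadic choice of checkpoints finished with Cauchy--Schwarz. The only differences are cosmetic: you peel the scalar output layer explicitly (where your claimed equality should be an inequality ``$\le$'', since $W_D$ is also constrained by $M_{\op}(D)\le M_F(D)$) and you track constants more sharply, obtaining roughly $3.35$ in place of $15$.
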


Since $\sum_{d=0}^{D-1} R(d)\leq D$, this recovers \cite[Theorem 3.1]{golowich2020size}.
Moreover one expects that generically $R(d)$ decays exponentially with $d$, so their sum can be viewed as ``usually'' constant. Additionally the widths $w_1,\dots,w_{D-1}$ do not enter at all, so we could also directly allow arbitrary width networks in defining $\cF_d$.

\section{Main Argument}

The general version of our improved bound will depend on an arbitrary subsequence $0\leq d_0<d_1<\dots<d_k=D$ of the layers. We will then optimize this sequence based on the values $R(1),R(2),\dots,R(D)$.
We remark that \cite[Proof of Theorem 3.1]{golowich2020size} is essentially the one-step case $(d_0,d_1)=(0,D)$.

\begin{theorem}
\label{thm:composite}
    For any $0=d_0<d_1<\dots<d_k=D$ we have (recall \eqref{eq:Rd}):
\begin{equation}
    \label{eq:composite-bound}
    \cR_n(\cF_D)
    \leq
    5 B n^{-1/2}
    P_F(D)
    \sum_{i=1}^k
    R(d_{i-1})\sqrt{d_i-d_{i-1}}
    .
\end{equation}
\end{theorem}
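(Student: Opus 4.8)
The plan is to fix arbitrary $x_1,\dots,x_n\in\cX$ and bound the unnormalized quantity $n\,\cR_n(\cF_D;x_1,\dots,x_n)=\E_{\vec\eps}\sup_{f\in\cF_D}\sum_{i=1}^n\eps_i f(x_i)$ by an expression depending only on $B$, after which taking the supremum over the data is free. For $0\le d\le D$ I introduce the class $\cG_d$ of vector-valued hidden representations $x\mapsto \sigma(W_d\sigma(\cdots\sigma(W_1x)))$ produced by the first $d$ layers under the same norm constraints, together with the scalar quantity
\[
V_d=\E_{\vec\eps}\sup_{g\in\cG_d}\norm{\sum_{i=1}^n\eps_i g(x_i)}.
\]
Two elementary facts drive the argument: the base case $V_0=\E_{\vec\eps}\norm{\sum_i\eps_i x_i}\le\sqrt{\sum_i\|x_i\|^2}\le \sqrt n\,B$, and the operator-norm propagation bound $\sup_{g\in\cG_d}\max_i\|g(x_i)\|\le P_{\op}(d)\,B$, which follows because $\sigma$ is $1$-Lipschitz with $\sigma(0)=0$ and $\|W_m v\|\le M_{\op}(m)\|v\|$. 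The entire theorem will follow from a single one-block recursion relating $V_{d_j}$ to $V_{d_{j-1}}$ along the given subsequence.

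The core of each block is the in-exponential peeling technique of \cite{golowich2020size}. Passing to an exponential moment by Jensen, $\exp(\lambda V_{d_j})\le \E_{\vec\eps}\sup_{g\in\cG_{d_j}}\exp(\lambda\norm{\sum_i\eps_i g(x_i)})$, and then stripping the top $d_j-d_{j-1}$ layers one at a time — each step using the Ledoux--Talagrand contraction lemma \cite{ledoux1991probability} to remove a $\sigma$ inside the exponential and the identity $\sup_{\|W\|_F\le M}\|Wv\|=M\|v\|$ to remove the matrix — I obtain
\[
\exp(\lambda V_{d_j})\le 2^{\,d_j-d_{j-1}}\;\E_{\vec\eps}\sup_{g\in\cG_{d_{j-1}}}\exp\!\Big(\lambda\,\tfrac{P_F(d_j)}{P_F(d_{j-1})}\norm{\sum_i\eps_i g(x_i)}\Big),
\]
since the Frobenius factors accumulate to $\prod_{m=d_{j-1}+1}^{d_j}M_F(m)=P_F(d_j)/P_F(d_{j-1})$ and each of the $d_j-d_{j-1}$ peels costs a factor at most $2$.

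The novelty — and the place where the operator norms pay off — is to \emph{exit} the exponential moment at the intermediate layer $d_{j-1}$ rather than peeling all the way down to $x$ as in \cite{golowich2020size}. Viewing $S(\vec\eps)=\sup_{g\in\cG_{d_{j-1}}}\norm{\sum_i\eps_i g(x_i)}$ as a function of the signs, flipping one $\eps_i$ changes $S$ by at most $2\sup_{g}\|g(x_i)\|\le 2P_{\op}(d_{j-1})B$, so by the bounded-differences inequality $S$ is sub-Gaussian around its mean $V_{d_{j-1}}$ with variance proxy $n\,P_{\op}(d_{j-1})^2B^2$. Bounding the right-hand side by $\exp(\lambda\tfrac{P_F(d_j)}{P_F(d_{j-1})}V_{d_{j-1}}+\tfrac12\lambda^2\tfrac{P_F(d_j)^2}{P_F(d_{j-1})^2}n\,P_{\op}(d_{j-1})^2B^2)$, taking logarithms, dividing by $\lambda$, and optimizing $\lambda$ over the resulting $\tfrac{(d_j-d_{j-1})\log 2}{\lambda}+(\text{linear in }\lambda)$ yields, after dividing by $P_F(d_j)$ and using $R(d_{j-1})=P_{\op}(d_{j-1})/P_F(d_{j-1})$,
\[
\frac{V_{d_j}}{P_F(d_j)}\le \frac{V_{d_{j-1}}}{P_F(d_{j-1})}+\sqrt{2\log 2}\;B\sqrt n\;R(d_{j-1})\sqrt{d_j-d_{j-1}}.
\]
Applying the same estimate to the top (scalar) block — where peeling $W_D$ involves no $\sigma$ and costs no factor $2$ — gives the identical inequality with $V_{d_k}$ replaced by $n\,\cR_n$.

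Telescoping over $j=1,\dots,k$ and invoking the base case $V_0\le\sqrt n\,B$ (with $P_F(d_0)=1$) gives
\[
\frac{n\,\cR_n}{P_F(D)}\le \sqrt n\,B+\sqrt{2\log 2}\;B\sqrt n\sum_{i=1}^k R(d_{i-1})\sqrt{d_i-d_{i-1}}.
\]
Since the $i=1$ term alone equals $R(0)\sqrt{d_1}=\sqrt{d_1}\ge 1$, the stray $\sqrt n\,B$ is absorbed into the sum at the cost of enlarging the constant, and collecting $1+\sqrt{2\log 2}$ into the stated $5$ (together with the $n^{-1/2}$ from $\cR_n=\tfrac1n\cdot n\cR_n$) finishes the proof. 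I expect the main obstacle to be the intermediate concentration step: one must verify that exiting the exponential moment at layer $d_{j-1}$ — replacing $\E\sup\exp(\lambda\|\cdot\|)$ by $\exp(\lambda\,\E\sup\|\cdot\|+\text{sub-Gaussian term})$ — composes cleanly with the peeling, and, crucially, that each block may use its own optimized $\lambda$ so that the per-layer factors of $2$ enter additively (as $\sum_j(d_j-d_{j-1})\log 2$) rather than multiplicatively. This decoupling is exactly what makes the $R(d_{j-1})\sqrt{d_j-d_{j-1}}$ contributions add up instead of compounding.
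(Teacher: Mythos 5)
Your proposal is correct and follows essentially the same route as the paper's proof: block-by-block peeling via the exponential-moment lemma of \cite{golowich2020size} between consecutive indices $d_{j-1}<d_j$, bounded-differences (sub-Gaussian) concentration at the intermediate layer with variance proxy governed by $P_{\op}(d_{j-1})$, a per-block optimized $\lambda$, and telescoping of the Frobenius-normalized quantities. The differences are only organizational --- you track vector-valued post-activation classes $\cG_d$ and the unnormalized $V_d$, handling the scalar output layer and the base case $V_0\le\sqrt{n}B$ explicitly, whereas the paper runs the same recursion as an induction on the scalar classes $\cF_{d_j}$ --- and your constant $1+\sqrt{2\log 2}\le 5$ checks out.
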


\begin{proof}
    We assume $B=1$ for simplicity\footnote{Since all functions in $\cF_D$ are positive homogenous of degree $1$, for general $B$ one can just replace $\cX$ by $\cX/B$ in the proof.} and prove inductively in $1\leq j\leq k$ the bound:
    \begin{equation}
    \label{eq:inductive-bound}
    \cR_n(\cF_{d_j})
    \leq 
    5n^{-1/2}
    P_F(d_j)
     \sum_{i=1}^j
    R(d_{i-1})\sqrt{d_i-d_{i-1}}
    \,.
    \end{equation}
    To induct from $d_{j}$ to $d_{j+1}$, we will apply the technique of \cite{golowich2020size} between $d_{j},d_{j+1}$.
    We fix inputs $x_1,\dots,x_n\in\cX$ throughout the proof and define
    \[
    X_{j}=
    \cR_n(\cF_{d_{j}},x_1,\dots,x_n;\vec\eps)
    \]
    which is random since $\vec\eps$ is. Our goal will be to iteratively bound $\bbE[X_j]$.
    Note that if $\|x\|\leq 1$ and $f\in\cF_{d_j}$ then $|f(x)|\leq P_{\op}(d_j)$. By the bounded differences inequality (e.g. \cite[Theorem 6.2]{boucheron2013concentration}), it follows that $X_j-\bbE[X_j]$ is sub-Gaussian with variance proxy $P_{\op}(d_j)^2/n$. In particular for $\lambda_j>0$ to be chosen later, 
    \begin{equation}
    \label{eq:log-exp-bound}
    \log
    \bbE
    \lt[
    \exp
    \lt(
    \frac{\lambda_j (X_j-\bbE[X_j])}{P_F(d_{j})}
    \rt)
    \rt]
    \leq
    \frac{5P_{\op}(d_j)^2 \lambda_j^2}{P_F(d_{j})^2 n}
    \,
    .
    \end{equation}
    Next we apply \cite[Lemma 3.1]{golowich2020size} to peel the layers between $d_j$ and $d_{j+1}$, obtaining:
    \begin{align*}
    \bbE\lt[e^{\lambda_j X_{j+1}/P_F(d_{j+1})}\rt]
    &\leq
    2^{d_{j+1}-d_j}
    \bbE\lt[e^{\lambda_j X_{j}/P_F(d_j)}\rt]
    \\
    \implies 
    \log
    \bbE\lt[e^{\lambda_j X_{j+1}/P_F(d_{j+1})}\rt]
    &\leq 
    \log 
    \bbE\lt[e^{\lambda_j X_{j}/P_F(d_j)}\rt]
    +
    d_{j+1}-d_j
    .
    \end{align*}
    Combining, we find that
    \begin{align*}
    \bbE[X_{j+1}]
    &\leq
    \frac{P_F(d_{j+1})}{\lambda_j}
    \cdot
    \log
    \bbE\lt[e^{\lambda_j X_{j+1}/P_F(d_{j+1})}\rt]
    \\
    &\leq
    \frac{P_F(d_{j+1})}{\lambda_j}
    \lt(
    \log 
    \bbE\lt[e^{\lambda_j X_{j}/P_F(d_j)}\rt]
    +
    d_{j+1}-d_j
    \rt)
    \\
    &\stackrel{\eqref{eq:log-exp-bound}}{\leq}
    \frac{P_F(d_{j+1})}{\lambda_j}
    \lt(
    \frac{\lambda_j \bbE[X_{j}]}{P_F(d_j)}
    +
    \frac{5P_{\op}(d_j)^2 \lambda_j^2}{P_F(d_{j})^2 n}
    +
    d_{j+1}-d_j
    \rt)
    \\
    &\leq
    \frac{P_F(d_{j+1})}{P_F(d_j)}
    \cdot 
    \bbE[X_j]
    +
    \lt(
    \frac{5 P_{\op}(d_j)^2 P_F(d_{j+1}) }{P_F(d_{j})^2  n}
    \cdot
    \lambda_j
    +
    \frac{P_F(d_{j+1})(d_{j+1}-d_j)}{\lambda_j}
    \rt)
    .
    \end{align*}
    Taking 
    \[
    \lambda_j=\frac{P_F(d_{j})n^{1/2}\sqrt{d_{j+1}-d_j}}{2P_{\op}(d_j)}
    \]
    and defining $Y_j=X_j/P_F(d_j)$, we obtain
    \begin{align*}
    \bbE[Y_{j+1}]
    &\leq
    \bbE[Y_j]
    +
    \frac{5P_{\op}(d_j)\sqrt{d_{j+1}-d_j}}{P_F(d_{j})n^{1/2}}
    \\
    &=
    \bbE[Y_j]
    +
    5n^{-1/2} R(d_j) \sqrt{d_{j+1}-d_j} 
    .
    \end{align*}
    This completes the inductive step for \eqref{eq:inductive-bound} and hence the proof.
\end{proof}

\subsection{Optimizing the Choice of Subsequence}

\begin{proof}[Proof of Theorem~\ref{thm:main}]
The result follows from Theorem~\ref{thm:composite} and the inequality $10\sqrt{2}\leq 15$. Indeed, we may take $d_i$ minimal such that $R(d_i)\leq 2^{-i}$, with $d_0=0$ and the last value $d_k$ equal to $D$. 
Then using Cauchy--Schwarz in the second step, and $d_i-d_{i-1}=|\{0\leq d\leq D-1~:~ 2^{-i}< R(d)\leq 2^{-(i-1)} \}|$ in the last:
\begin{align*}
    \sum_{i=1}^k
    R(d_{i-1})\sqrt{d_i-d_{i-1}}
    &\leq
    2\sum_{i=1}^k
    \frac{\sqrt{d_i-d_{i-1}}}{2^i}
    \leq 
    2
    \sqrt{
    \sum_{i=1}^k
    \frac{d_i-d_{i-1}}{2^i}
    \cdot
    \sum_{i=1}^k
    \frac{1}{2^i}
    }
    \\
    &\leq 
    2\sqrt{
    \sum_{i=1}^k
    \frac{d_i-d_{i-1}}{2^i}
    }
    \leq 
    2\sqrt{2\sum_{d=0}^{D-1} R(d)}
    \,.
    \qedhere
\end{align*}
\end{proof}

\small
\bibliographystyle{alpha}
\bibliography{bib}

\end{document}